\newcommand{\argmin}[1]{\underset{#1}{\mathrm{argmin}} \:}
\newcommand{\inner}[2]{\left\langle #1, #2 \right\rangle}
\newtheorem{theorem}{Theorem}
\newtheorem{corollary}{Corollary}
\newtheorem*{theorem*}{Theorem}
\newcommand{\norm}[1]{\left\lVert{#1}\right\rVert}
\newcommand{\abs}[1]{\left\lvert{#1}\right\rvert}
\newcommand{\R}{\mathbb{R}}
\newcommand{\T}{^\top}
\newcommand{\RR}{\mathbb{R}}
\newcommand{\vu}{\boldsymbol{u}}
\newcommand{\vv}{\boldsymbol{v}}
\newcommand{\vx}{\boldsymbol{x}}
\newcommand{\vy}{\boldsymbol{y}}
\newcommand{\mU}{\boldsymbol{U}}
\newcommand{\mV}{\boldsymbol{V}}
\newcommand{\mW}{\boldsymbol{W}}
\newcommand{\calU}{\mathcal{U}}
\newcommand{\calD}{\mathcal{D}}
\newcommand{\calA}{\mathcal{A}}
\title{In Search of the Real Inductive Bias:\\
On the Role of Implicit Regularization in Deep Learning}
\author{
Behnam Neyshabur, Ryota Tomioka \& Nathan Srebro\\
Toyota Technological Institute at Chicago\\
Chicago, IL 60637, USA \\
\texttt{\{bneyshabur,tomioka,nati\}@ttic.edu} \\
}
\begin{document}
\maketitle

\begin{abstract}
  We present experiments demonstrating that some other form of
  capacity control, different from network size, plays a central role
  in learning multi-layer feed-forward networks.  We argue,
  partially through analogy to matrix factorization, that this is an
  inductive bias that can help shed light on deep learning.
\end{abstract}

\section{Introduction}
Central to any form of learning is an inductive bias that induces some
sort of capacity control (i.e.~restricts or encourages predictors to
be ``simple'' in some way), which in turn allows for generalization.
The success of learning then depends on how well the inductive bias
captures reality (i.e.~how expressive is the hypothesis class of
``simple'' predictors) relative to the capacity induced, as well as on
the computational complexity of fitting a ``simple'' predictor to the
training data.

Let us consider learning with feed-forward networks from this
perspective.  If we search for the weights minimizing the training
error, we are essentially considering the hypothesis class of
predictors representable with different weight vectors, typically for
some fixed architecture.  Capacity is then controlled by the size
(number of weights) of the network\footnote{The exact correspondence
  depends on the activation function---for hard thresholding
  activation the pseudo-dimension, and hence sample complexity, scales
  as $O(S \log S)$, where $S$ is the number of weights in the network. 
  With sigmoidal activation it is between $\Omega(S^2)$ and
  $O(S^4)$ \citep{anthony1999}.}.  Our justification for using such networks is
then that many interesting and realistic functions can be represented
by not-too-large (and hence bounded capacity) feed-forward networks.
Indeed, in many cases we can show how specific architectures can
capture desired behaviors.  More broadly, any $O(T)$ time computable
function can be captured by an $O(T^2)$ sized network, and so the
expressive power of such networks is indeed great~\cite[Theorem~9.25]{sipser2012}.

At the same time, we also know that learning even moderately sized
networks is computationally intractable---not only is it NP-hard to
minimize the empirical error, even with only three hidden units, but
it is hard to learn small feed-forward networks using {\em any}
learning method (subject to cryptographic assumptions).  That is, even
for binary classification using a network with a single hidden layer and a logarithmic (in the
input size) number of hidden units, and even if we know the true
targets are {\em exactly} captured by such a small network, there is
likely no efficient algorithm that can ensure error better than 1/2
\citep{klivans2006cryptographic,Daniely14}---not if the algorithm
tries to fit such a network, not even if it tries to fit a much larger
network, and in fact no matter how the algorithm represents
predictors (see the Appendix).  And so, merely knowing that some not-too-large
architecture is excellent in expressing reality does {\em not} explain
why we are able to learn using it, nor using an even larger network.
Why is it then that we succeed in learning using multilayer
feed-forward networks?  Can we identify a property that makes them
possible to learn?  An alternative inductive bias?

Here, we make our first steps at shedding light on this question by
going back to our understanding of network size as the capacity
control at play.  

Our main observation, based on empirical experimentation with
single-hidden-layer networks of increasing size (increasing number of
hidden units), is that size does {\em not} behave as a capacity
control parameter, and in fact there must be some other, implicit,
capacity control at play.  We suggest that this hidden capacity
control might be the real inductive bias when learning with deep
networks.

In order to try to gain an understanding at the possible inductive
bias, we draw an analogy to matrix factorization and understand
dimensionality versus norm control there.  Based on this analogy we
suggest that implicit norm regularization might be central also for
deep learning, and also there we should think of infinite-sized
bounded-norm models.  We then also demonstrate how (implicit) $\ell_2$
weight decay in an infinite two-layer network gives rise to a ``convex
neural net'', with an infinite hidden layer and $\ell_1$ (not $\ell_2$)
regularization in the top layer.

\section{Network Size and Generalization}\label{sec:netsize}

Consider training a feed-forward network by finding the weights
minimizing the training error.  Specifically, we will consider a
network with $d$ real-valued inputs $\vx=(x[1],\ldots,x[d])$, a single
hidden layer with $H$ rectified linear units, and $k$ outputs
$y[1],\ldots,y[k]$,
\begin{equation}
  \label{eq:yk}
  y[j] = \sum_{h=1}^H v_{hj} [\inner{\vu_h}{\vx}]_+
\end{equation}
where $[z]_{+}:=\max(z,0)$ is the rectified linear activation function
and $\vu_h\in\R^d, v_{hj}\in\R$ are the weights learned by minimizing
a (truncated) soft-max cross entropy loss\footnote{When using soft-max
  cross-entropy, the loss is never exactly zero for correct
  predictions with finite margins/confidences.  Instead, if the data
  is seperable, in order to minimize the loss the weights need to be
  scaled up toward infinity and the cross entropy loss goes to zero,
  and a global minimum is never attained.  In order to be able to say
  that we are actually reaching a zero loss solution, and hence a
  global minimum, we use a slightly modified soft-max which does not
  noticeably change the results in practice.  This truncated loss
  returns the same exact value for wrong predictions or correct
  prediction with confidences less than a threshold but returns zero
  for correct predictions with large enough margins: Let
  $\{s_i\}_{i=1}^k$ be the scores for $k$ possible labels and $c$ be
  the correct labels. Then the soft-max cross-entropy loss can be
  written as $\ell(s,c) = \ln \sum_{i} \exp(s_i - s_c)$ but we instead
  use the differentiable loss function $\hat{\ell}(s,c) = \ln \sum_{i}
  f(s_i-s_c)$ where $f(x)=\exp(x)$ for $x\geq -11$ and $f(x)
  =\exp(-11) [x+13]_+^2/4$ otherwise. Therefore, we only deviate from
  the soft-max cross-entropy when the margin is more than $11$, at
  which point the effect of this deviation is negligible (we always
  have $\abs{\ell(s,c)-\hat{\ell}(s,c)}\leq 0.000003k$)---if there are
  any actual errors the behavior on them would completely dominate
  correct examples with margin over $11$, and if there are no errors
  we are just capping the amount by which we need to scale up the
  weights.} on $n$ labeled training examples.  The total number of
weights is then $H(d+k)$.

What happens to the training and test errors when we increase the
network size $H$? The training error will necessarily decrease.  The
test error might initially decrease as the approximation error is
reduced and the network is better able to capture the targets.
However, as the size increases further, we loose our capacity control
and generalization ability, and should start overfitting.  This is the
classic approximation-estimation tradeoff behavior.

Consider, however, the results shown in Figure \ref{fig:intro}, where
we trained networks of increasing size on the MNIST and CIFAR-10
datasets.  Training was done using stochastic gradient descent with
momentum and diminishing step sizes, on the training error and without
any explicit regularization.  As expected, both training and test
error initially decrease.  More surprising is that if we increase the
size of the network past the size required to achieve zero training
error, the test error continues decreasing!  This behavior is not at
all predicted by, and even contrary to, viewing learning as fitting a
hypothesis class controlled by network size.  For example for MNIST, 32 units
are enough to attain zero training error.  When we allow more units,
the network is not fitting the training data any better, but the
estimation error, and hence the generalization error, should increase
with the increase in capacity.  However, the test error goes down.  In
fact, as we add more and more parameters, even beyond the number
of training examples, the generalization error does not go up.

\begin{figure*}[t!]
\hbox{ \centering
\setlength{\epsfxsize}{0.455\textwidth}
\epsfbox{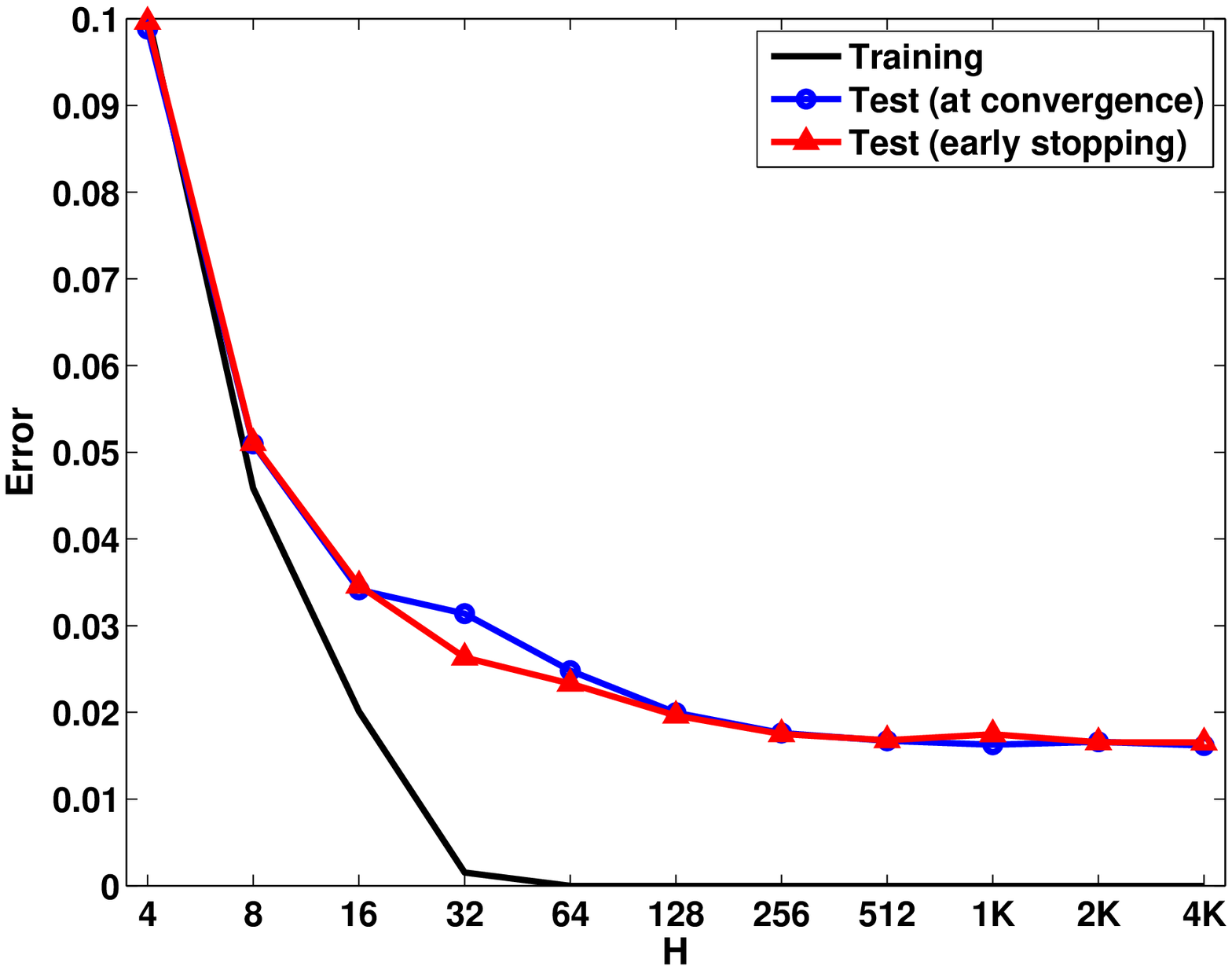}
\hspace{0.1in}
\setlength{\epsfxsize}{0.45\textwidth}
\epsfbox{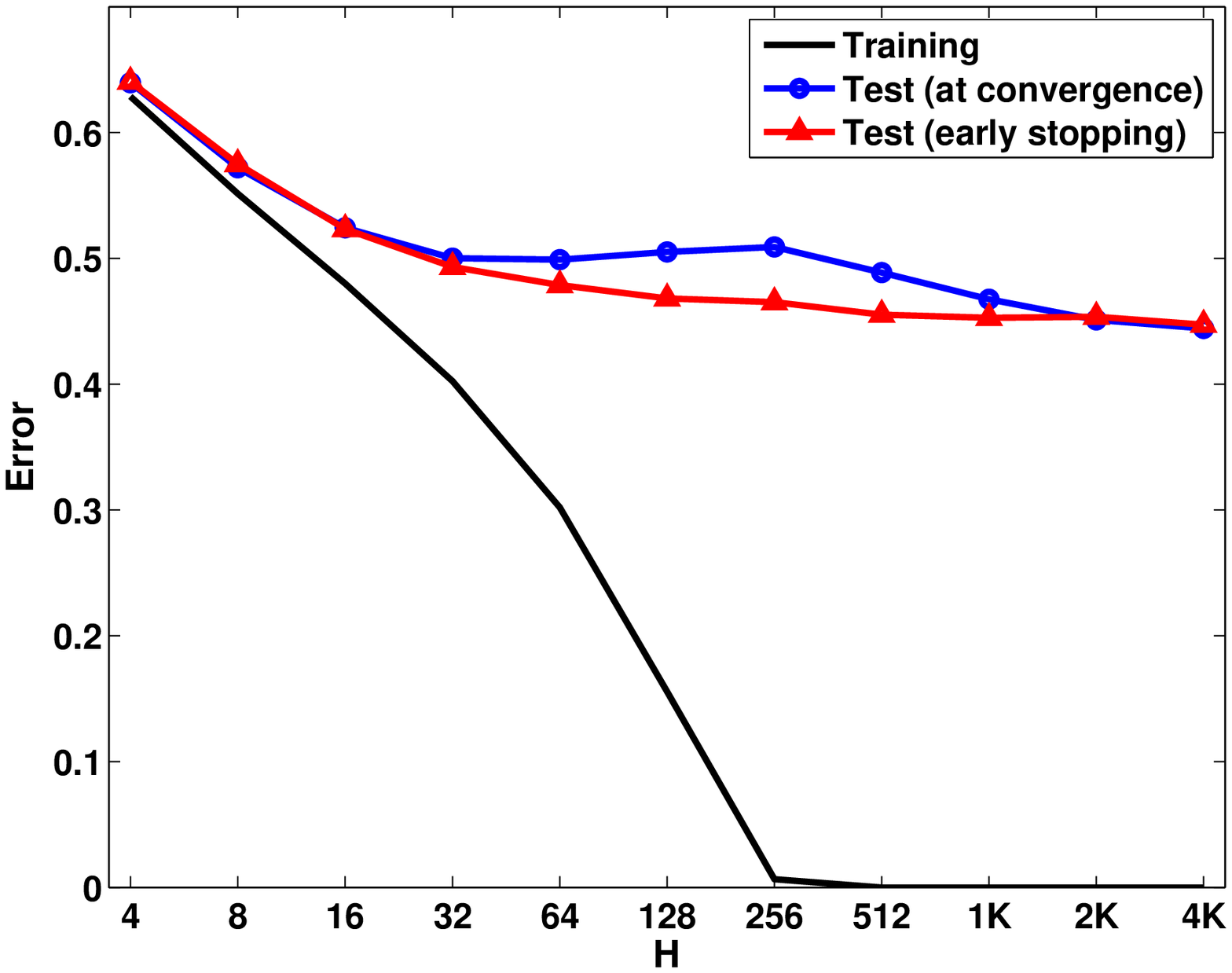}
}
\begin{picture}(0,0)(0,0)
\put(85, 155){MNIST}
\put(280, 155){CIFAR-10}
\end{picture}
\vspace{0.1in}
\caption{\small The training error and the test error based on different stopping
  criteria when 2-layer NNs with different number of hidden units
  are trained on MNIST and CIFAR-10. Images in both datasets are downsampled
  to 100 pixels. The size of the training set is 50000 for MNIST and 40000 for CIFAR-10.
  The early stopping is based on the error on a validation set
  (separate from the training and test sets) of size 10000. The training was
  done using stochastic gradient descent with momentum and mini-batches
  of size 100. The network was initialized with weights generated randomly from
  the Gaussian distribution. The initial step size and momentum were set to 0.1 and 0.5
  respectively. After each epoch, we used the update rule $\mu^{(t+1)}=0.99\mu^{(t)}$
  for the step size and $m^{(t+1)}=\min\{0.9,m^{(t)}+0.02\}$ for the momentum. 
  \label{fig:intro}
}
\end{figure*}

We also further tested this phenomena under some artificial
mutilations to the data set.  First, we wanted to artificially ensure
that the approximation error was indeed zero and does not decrease as
we add more units.  To this end, we first trained a network with a
small number $H_0$ of hidden units ($H_0=4$ on MNIST and $H_0=16$ on
CIFAR) on the entire dataset (train+test+validation).  This network
did have some disagreements with the correct labels, but we then
switched all labels to agree with the network creating a ``censored''
data set.  We can think of this censored data as representing an
artificial source distribution which can be exactly captured by a
network with $H_0$ hidden units.  That is, the approximation error is zero
for networks with at least $H_0$ hidden units, and so does not
decrease further. Still, as can be seen in the middle row of
Figure~\ref{fig:netsize}, the test error continues decreasing even
after reaching zero training error.

Next, we tried to force overfitting by adding random label noise to
the data.  We wanted to see whether now the network will use its
higher capacity to try to fit the noise, thus hurting generalization.
However, as can be seen in the bottom row of Figure~\ref{fig:netsize},
even with five percent random labels, there is no significant
overfitting and test error continues decreasing as network size
increases past the size required for achieving zero training error.

\begin{figure*}[t!]
\hbox{ \centering
\setlength{\epsfxsize}{0.455\textwidth}
\epsfbox{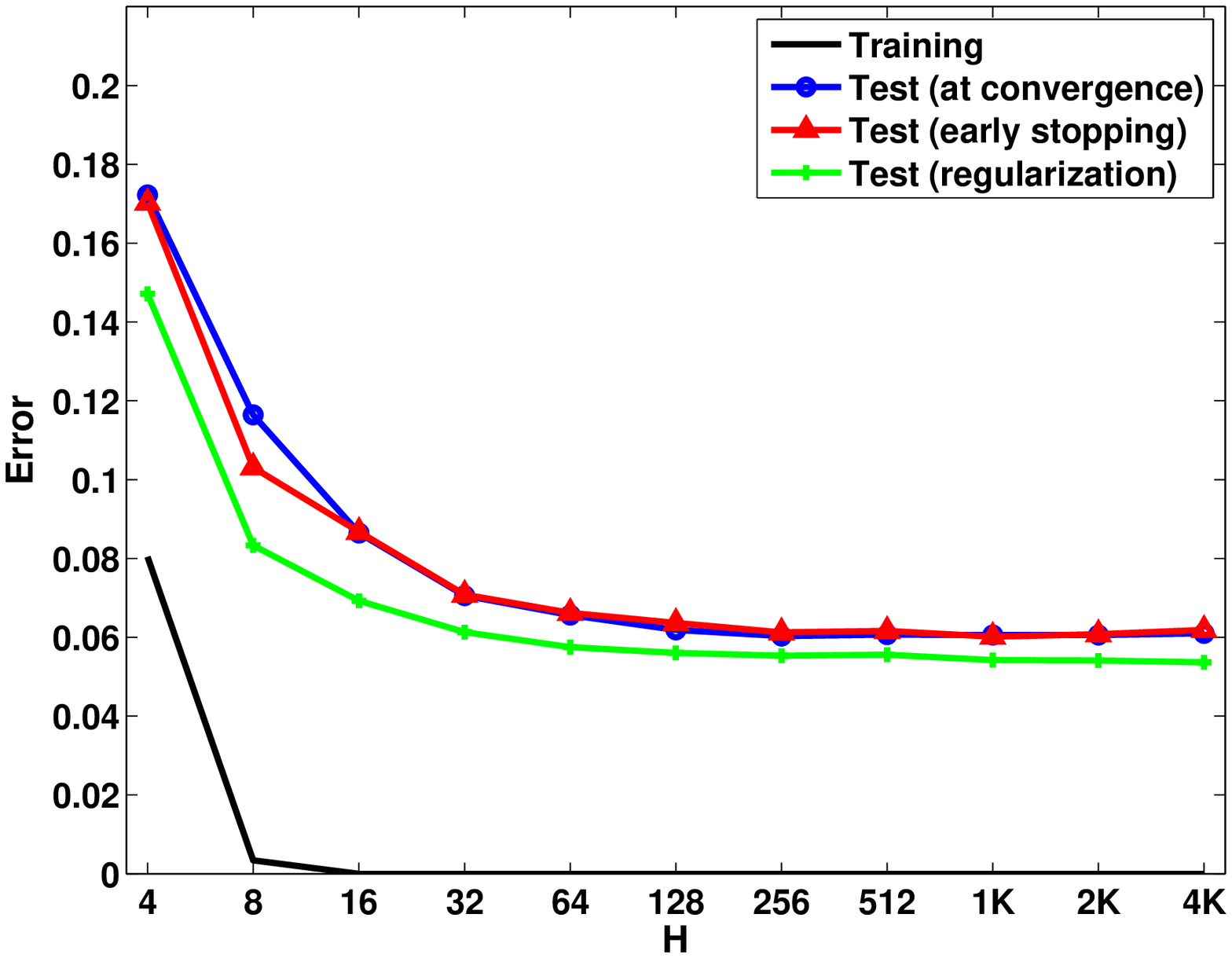}
\hspace{0.1in}
\setlength{\epsfxsize}{0.45\textwidth}
\epsfbox{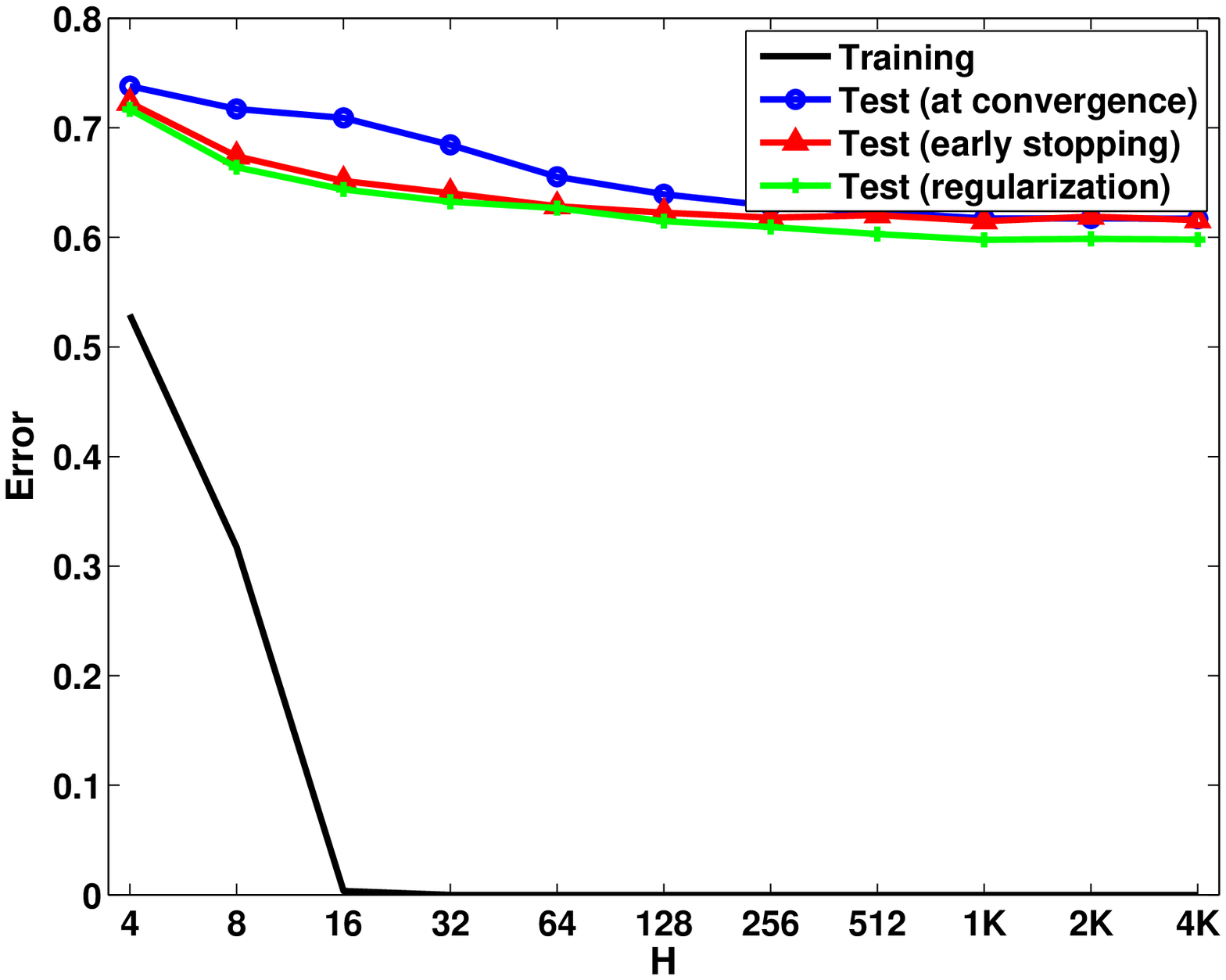}
}

\hbox{ \centering
\setlength{\epsfxsize}{0.455\textwidth}
\epsfbox{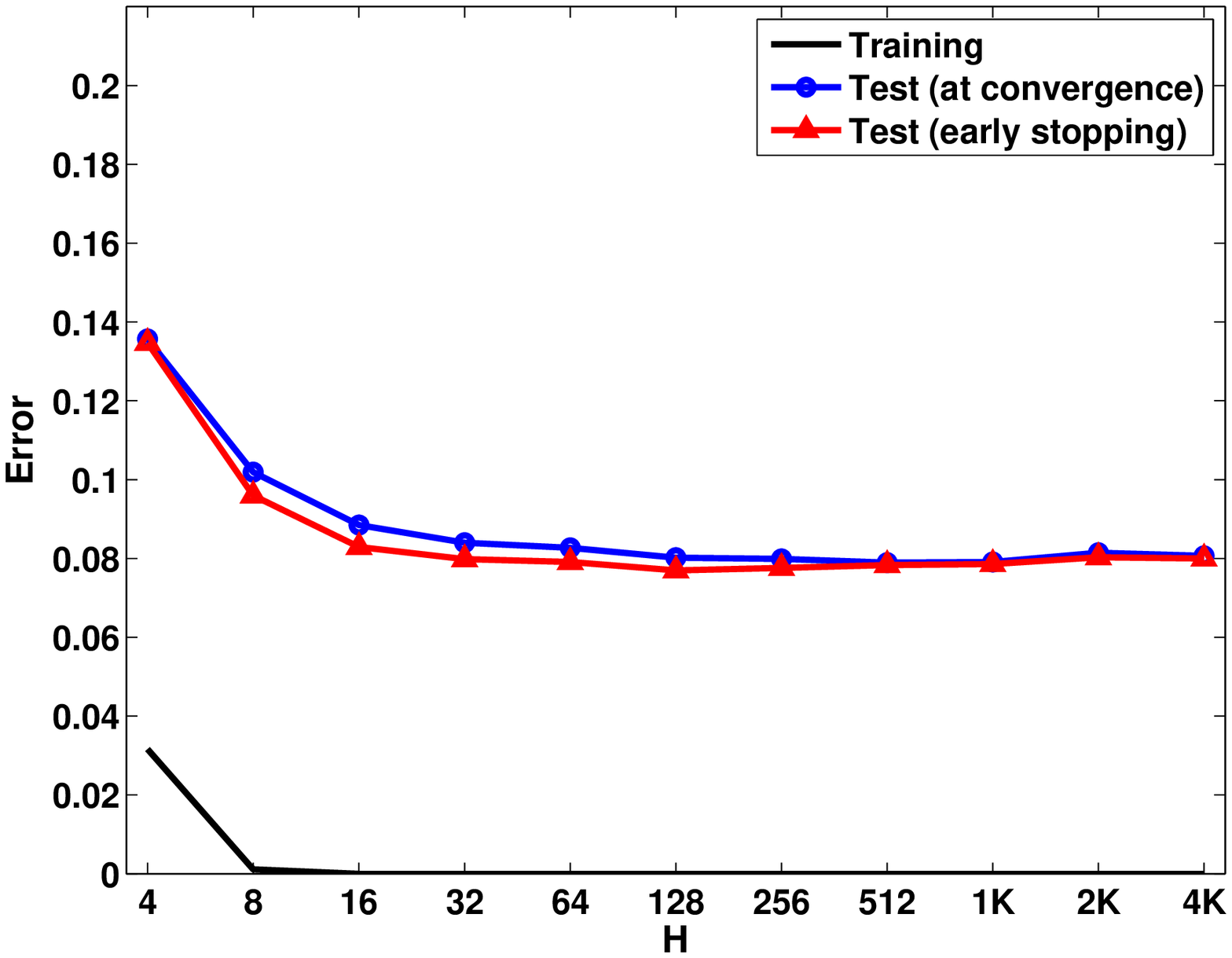}
\hspace{0.1in}
\setlength{\epsfxsize}{0.45\textwidth}
\epsfbox{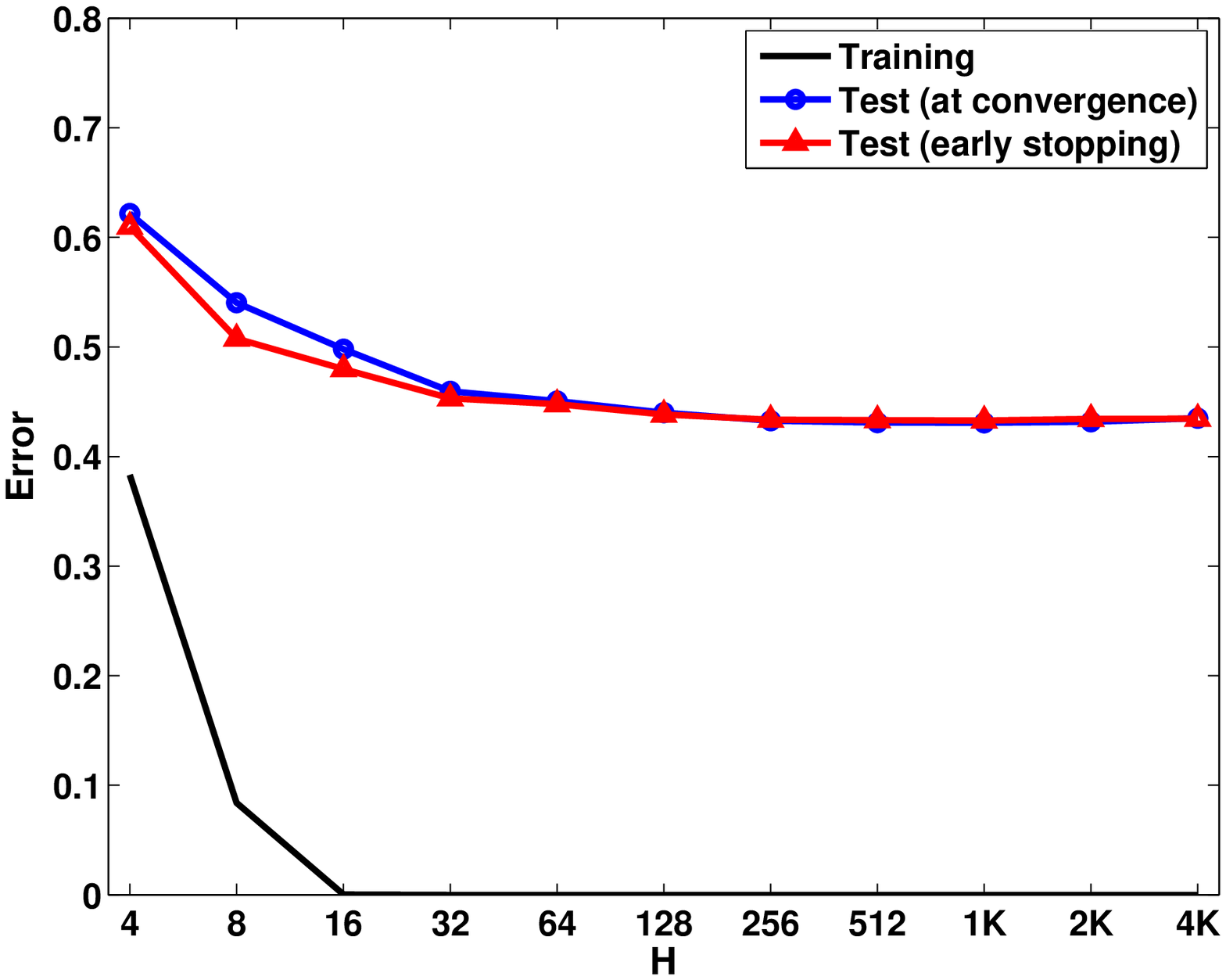}
}

\hbox{ \centering
\setlength{\epsfxsize}{0.455\textwidth}
\epsfbox{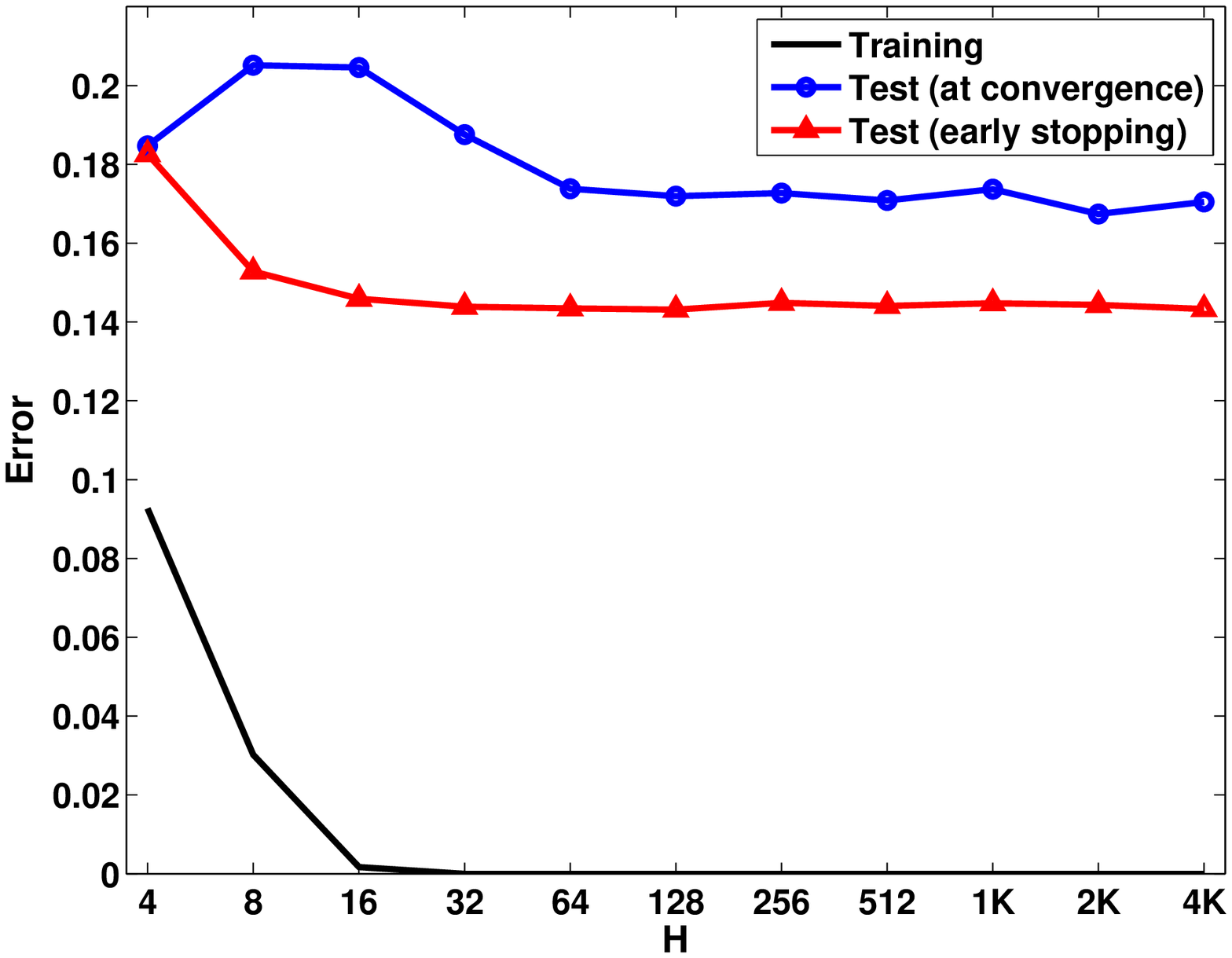}
\hspace{0.1in}
\setlength{\epsfxsize}{0.45\textwidth}
\epsfbox{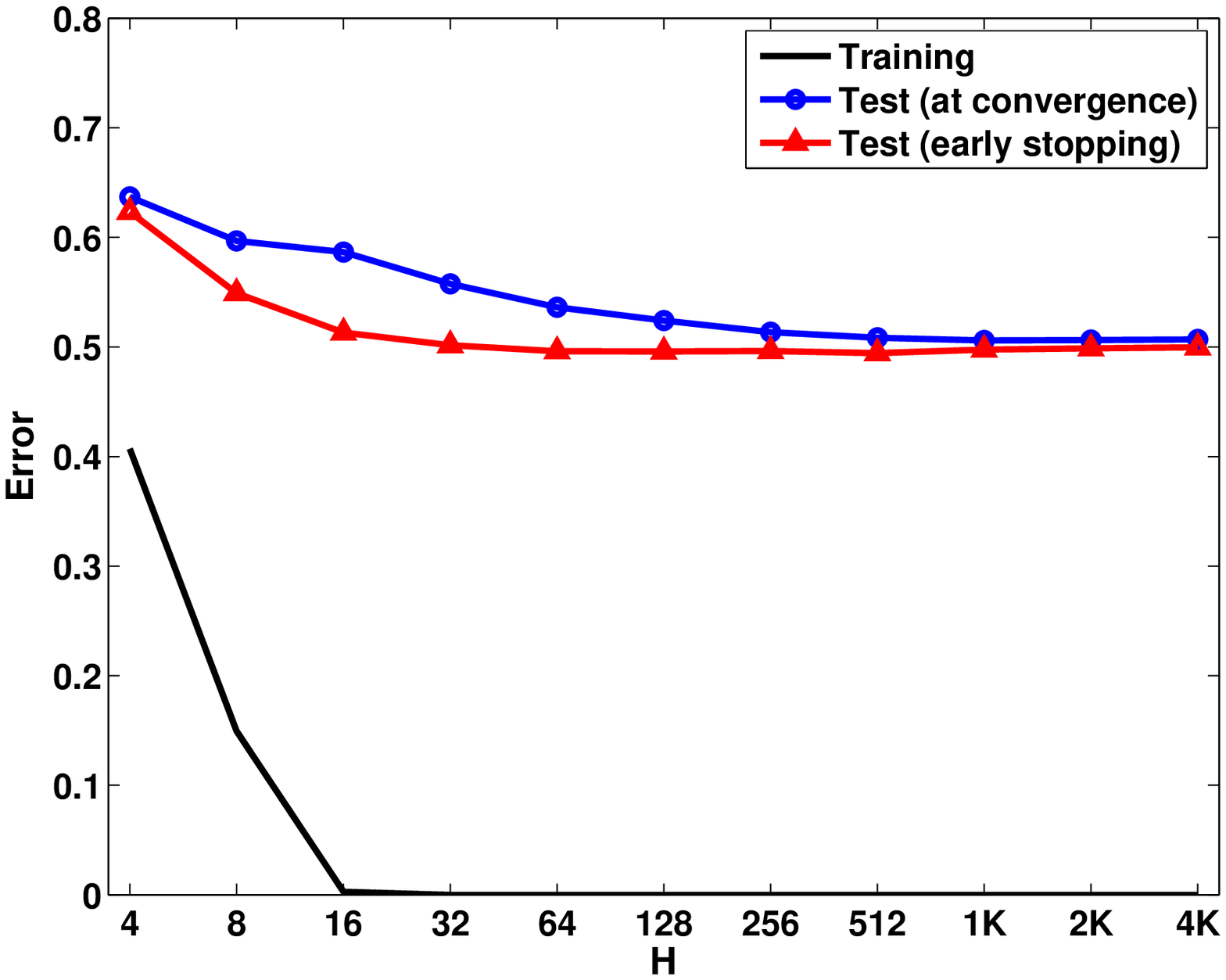}
}
\begin{picture}(0,0)(0,0)
\put(70, 445){Small MNIST}
\put(260, 445){Small CIFAR-10}
\end{picture}
\caption{\small The training error and the test error based on different stopping
  criteria when 2-layer NNs with different number of hidden units
  are trained on small subsets of MNIST and CIFAR-10. Images in both
  datasets are downsampled to 100 pixels. The sizes of the training
  and validation sets are 2000 for both MNIST and CIFAR-10 and the early stopping
  is based on the error on the validation set. The top plots are
  the errors for the original datasets with and without explicit regularization.The best
  weight decay parameter is chosen based on the validation error.
  The middle plots are on the censored data set that is constructed
  by switching all the labels to agree with the predictions of a trained network
  with a small number $H_0$ of hidden units  $H_0=4$ on MNIST and $H_0=16$
  on CIFAR-10) on the  entire dataset (train+test+validation).
  The plots on the bottom are also for the censored data except we also add 5 percent
  noise to the labels by randomly changing 5 percent of the labels. 
  The optimization method is the same as the in Figure 1.  The results in this figure are the average error over 5 random repetitions.
\label{fig:netsize}
}
\end{figure*}

What is happening here?  A possible explanation is that the
optimization is introducing some implicit regularization.  That is, we are
implicitly trying to find a solution with small ``complexity'', for
some notion of complexity, perhaps norm.  This can explain why we do
not overfit even when the number of parameters is huge.  Furthermore,
increasing the number of units might allow for solutions that actually
have lower ``complexity'', and thus generalization better.  Perhaps an
ideal then would be an infinite network controlled only through this
hidden complexity.

We want to emphasize that we are not including any explicit
regularization, neither as an explicit penalty term nor by modifying
optimization through, e.g., drop-outs, weight decay, or with
one-pass stochastic methods.  We are using a stochastic method, but we
are running it to convergence---we achieve zero surrogate loss and zero training error. In fact, we also tried training using batch conjugate
gradient descent and observed almost identical behavior.  But it seems
that even still, we are not getting to some random global
minimum---indeed for large networks the vast majority of the many
global minima of the training error would horribly overfit.  Instead,
the optimization is directing us toward a ``low complexity'' global
minimum.

Although we do not know what this hidden notion of complexity is, as a
final experiment we tried to see the effect of adding explicit
regularization in the form of weight decay.  The results are shown in the top
row of figure~\ref{fig:netsize}. There is a slight improvement in generalization
but we still see that increasing the network size helps generalization.

\section{A Matrix Factorization Analogy}

To gain some understanding at what might be going on, let us consider
a slightly simpler model which we do understand much better.  Instead of
rectified linear activations, consider a feed-forward network with a
single hidden layer, and {\em linear} activations, i.e.:
\begin{equation}
  \label{eq:ykl}
  y[j] = \sum_{h=1}^H v_{hj} \inner{\vu_h}{\vx}.
\end{equation}
This is of course simply a matrix-factorization model, where $\vy=\mW\vx$
and $\mW=\mV\mU\T$.  Controlling capacity by limiting the number of hidden
units exactly corresponds to constraining the rank of $\mW$,
i.e.~biasing toward low dimensional factorizations.  Such a low-rank
inductive bias is indeed sensible, though computationally intractable
to handle with most loss functions.

However, in the last decade we have seen much success for learning
with low {\em norm} factorizations.  In such models, we do not
constrain the inner dimensionality $H$ of $\mU,\mV$, and instead only
constrain, or regularize, their norm.  For example, constraining the
Frobenius norm of $\mU$ and $\mV$ corresponds to using the {\em
  trace-norm} as an inductive bias \citep{Srebro04}:
\begin{equation}\label{eq:tracenorm}
  \norm{\mW}_{tr} = \min_{\mW=\mV\mU\T} \frac{1}{2}(\norm{\mU}_F^2+\norm{\mV}_F^2).
\end{equation}
Other norms of the factorization lead to different regularizers.  

Unlike the rank, the trace-norm (as well as other factorization norms)
is convex, and leads to tractable learning problems
\citep{Fazel01,Srebro04}.  In fact, even if learning is done by a
local search over the factor matrices $\mU$ and $\mV$ (i.e.~by a local
search over the weights of the network), if the dimensionality is high
enough and the norm is regularized, we can ensure convergence to a
global minima \citep{Burer06}. This is in stark contrast to the
dimensionality-constrained low-rank situation, where the limiting
factor is the number of hidden units, and local minima are abundant
\citep{Srebro03}.

Furthermore, the trace-norm and other factorization norms are
well-justified as sensible inductive biases.  We can ensure
generalization based on having low trace-norm, and a low-trace norm
model corresponds to a realistic factor model with many factors of
limited overall influence.  In fact, empirical evidence suggests that
in many cases low-norm factorization are a more appropriate inductive
bias compared to low-rank models.

We see, then, that in the case of linear activations (i.e.~matrix
factorization), the norm of the factorization is in a sense a better
inductive bias than the number of weights: it ensures generalization,
it is grounded in reality, and it explain why the models can be learned
tractably.  

Let us interpret the experimental results of Section \ref{sec:netsize} in this
light.  Perhaps learning is succeeding not because there is a good
representation of the targets with a small number of units, but rather
because there is a good representation with small overall norm, and
the optimization is implicitly biasing us toward low-norm models.
Such an inductive bias might potentially explain both the
generalization ability {\em and} the computational tractability of
learning, even using local search.

Under this interpretation, we really should be using infinite-sized
networks, with an infinite number of hidden units.  Fitting a finite
network (with implicit regularization) can be viewed as an
approximation to fitting the ``true'' infinite network.  This
situation is also common in matrix factorization: e.g., a very
successful approach for training low trace-norm models, and other
infinite-dimensional bounded-norm factorization models, is to
approximate them using a finite dimensional representation
\cite{rennie2005fast,srebro2010collaborative}.  The finite
dimensionality is then not used at all for capacity (statistical
complexity) control, but purely for computational reasons.  Indeed, increasing
the allowed dimensionality generally improves generalization performance, as it
allows us to better approximate the true infinite model.

\section{Infinite Size, Bounded Norm Networks}

In this final section, we consider a possible model for infinite sized
norm-regularized networks.  Our starting point is that of global weight decay,
i.e.~adding a regularization term that penalizes the sum of squares of
all weights in the network, as might be approximately introduced by
some implicit regularization.  Our result in this Section is that this
global $\ell_2$ regularization is equivalent to a  {\em Convex Neural
Network} (Convex NN; \cite{Bengio05})---an infinite network with $\ell_1$
regularization on the top layer.  Note that such models are rather
different from infinite networks with $\ell_2$ regularization on the
top layer, which reduce to linear methods with some specific
kernel~\citep{Cho09,Bach14}. Note also that our aim here is to explain what
neural networks are doing instead of trying to match the performance 
of deep models with a known shallow model as done by, e.g., \citet{Lu14}.

For simplicity, we will focus on single output networks ($k=1$), i.e.~networks
which compute a function $f:\R^d\rightarrow\R$.  We first consider
finite two-layer networks (with a single hidden layer) and show that
$\ell_2$ regularization on both layers is equivalent to and $\ell_2$
constraint on each unit in the hidden layer, and $\ell_1$
regularization on the top unit:
\begin{theorem}\label{thm:L2toL1}
Let $L$ be a loss function and $D=(\vx_t,y_t)_{t=1}^{n}$ be training examples.
\begin{align}\label{eq:weight-decay-net}
 \argmin{\vv\in\RR^{H},(\vu_h)_{h=1}^{H}}\left(
\sum_{t=1}^{n}L\left(y_t, \sum\nolimits_{h=1}^{H}v_h[\inner{\vu_h}{\vx_t}]_+\right)+\frac{\lambda}{2}\sum_{h=1}^{H}\left(\norm{\vu_h}^2+|v_h|^2\right)
\right),
\end{align}
is the same as
\begin{align}
\label{eq:learned-L1}
 \argmin{\vv\in\RR^{H},(\vu_h)_{h=1}^{H}}\Biggl(
&\sum_{t=1}^{n}L\left(y_t,
 \sum\nolimits_{h=1}^{H}v_h[\inner{\vu_h}{\vx_t}]_+\right)+\lambda\sum_{h=1}^{H}|v_h|\Biggr),\\
& \text{subject to}\quad \|\vu_h\|\leq 1\quad (h=1,\ldots,H).\notag
\end{align}
\end{theorem}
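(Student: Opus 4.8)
The key structural fact I would build on is the \emph{positive homogeneity} of the rectified linear activation: for any scalar $\alpha_h>0$, the unit $(\vu_h,v_h)$ and the rescaled unit $(\alpha_h\vu_h,\,v_h/\alpha_h)$ produce identical summands, since $(v_h/\alpha_h)\,[\inner{\alpha_h\vu_h}{\vx}]_+ = v_h\,[\inner{\vu_h}{\vx}]_+$. Hence the common data-fitting term, which I will abbreviate as $\mathcal{L}=\sum_{t=1}^n L\!\left(y_t,\sum_h v_h[\inner{\vu_h}{\vx_t}]_+\right)$, is invariant under such per-unit rescaling in both \eqref{eq:weight-decay-net} and \eqref{eq:learned-L1}, and only the regularizer changes. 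The plan is to show that each of the two problems has the same optimal value as a single intermediate problem $P$, namely minimizing $\mathcal{L}+\lambda\sum_h |v_h|\,\norm{\vu_h}$ over all $\vv,(\vu_h)$ with no constraint; the product $|v_h|\,\norm{\vu_h}$ is precisely the rescaling-invariant quantity.

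First I would reduce \eqref{eq:weight-decay-net} to $P$. For any configuration, the AM--GM inequality gives $\tfrac{\lambda}{2}(\norm{\vu_h}^2+|v_h|^2)\ge \lambda\,|v_h|\,\norm{\vu_h}$ term by term, so the objective of \eqref{eq:weight-decay-net} is at least $\mathcal{L}+\lambda\sum_h|v_h|\,\norm{\vu_h}\ge P$, whence $\min\eqref{eq:weight-decay-net}\ge P$. For the reverse direction I would take a minimizer of $P$ and rescale each unit by $\alpha_h=\sqrt{|v_h|/\norm{\vu_h}}$, which equalizes $\norm{\alpha_h\vu_h}=|v_h|/\alpha_h$ and so meets AM--GM with equality; the loss is unchanged and the $\ell_2$ penalty becomes exactly $\lambda\sum_h|v_h|\,\norm{\vu_h}=P$, giving $\min\eqref{eq:weight-decay-net}\le P$. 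Thus the two values coincide.

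Next I would match $P$ with \eqref{eq:learned-L1}. For any feasible point of \eqref{eq:learned-L1} we have $\norm{\vu_h}\le 1$, so $|v_h|\ge |v_h|\,\norm{\vu_h}$ and its objective dominates $\mathcal{L}+\lambda\sum_h|v_h|\,\norm{\vu_h}\ge P$, yielding $\min\eqref{eq:learned-L1}\ge P$. Conversely, I would take a minimizer of $P$ and rescale each unit to unit norm via $\alpha_h=1/\norm{\vu_h}$, which replaces $v_h$ by $v_h\norm{\vu_h}$ and leaves the loss unchanged; the result is feasible for \eqref{eq:learned-L1} and its objective equals $\mathcal{L}+\lambda\sum_h|v_h|\,\norm{\vu_h}=P$, so $\min\eqref{eq:learned-L1}\le P$. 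Chaining the two reductions gives $\min\eqref{eq:weight-decay-net}=P=\min\eqref{eq:learned-L1}$, with minimizers of the two problems related by the explicit positive rescalings above, which never alter the function computed.

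This argument is mostly bookkeeping once the homogeneity-plus-AM--GM idea is in place, so I do not expect a serious obstacle; the one point requiring care—and the only real subtlety—is the handling of degenerate units where $\vu_h=0$ or $v_h=0$. For these the rescaling factor $\alpha_h$ above is ill-defined, but such a unit contributes zero to the network output and, at an optimum, zero to both regularizers, so it can simply be set to $v_h=0$ with $\vu_h$ any unit vector without affecting either problem. Accordingly, the assertion that the two argmins are ``the same'' should be read up to this per-unit positive rescaling equivalence.
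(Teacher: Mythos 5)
Your proposal is correct and follows essentially the same route as the paper's own proof: the AM--GM inequality $\tfrac{1}{2}(\norm{\vu_h}^2+|v_h|^2)\ge\norm{\vu_h}\,|v_h|$ combined with the positive homogeneity of the rectified linear unit, which lets one rescale each unit to attain equality and then normalize $\norm{\vu_h}$ to (at most) one. Your write-up is simply a more explicit version of the paper's argument---making the intermediate problem with regularizer $\lambda\sum_h|v_h|\,\norm{\vu_h}$ explicit, checking both inequality directions, and handling the degenerate units $\vu_h=0$ or $v_h=0$ that the paper glosses over.
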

\begin{proof}
By the inequality between the arithmetic and
geometric means, we have
\begin{align*}
\frac{1}{2}\sum_{h=1}^{H}\left(\|\vu_h\|^2+|v_h|^2\right)\geq \sum_{h=1}^{H}\|\vu_h\|\cdot|v_h|.
\end{align*}
The right-hand side can always be attained without changing the
 input-output mapping by the rescaling
$\tilde{\vu}_h=\sqrt{|v_h|/\|\vu_h\|}\cdot\vu_h$
 and $\tilde{v}_h=\sqrt{\|\vu_h\|/|v_h|}\cdot v_h$.
The reason we can rescale the weights without changing 
the input-output mapping is that the rectified linear unit is
 piece-wise linear and the piece a hidden unit is on is invariant to 
rescaling of the weights. 
Finally, since the right-hand side of the
 above inequality is invariant to rescaling, we can always choose
the norm $\|\vu_h\|$ to be bounded by one.
\end{proof}

Now we establish a connection between the $\ell_1$-regularized network
\eqref{eq:learned-L1}, in which we learn the input to hidden
connections, to convex NN \citep{Bengio05}.

First we recall the definition of a convex NN. Let $\calU$ be a fixed
``library'' of possible weight vectors $\{\vu\in\RR^{d}: \|\vu\|\leq
1\}$ and $\mu_+$ and $\mu_-$ be positive (unnormalized) measures over
$\calU$ representing the positive and negative part of the weights of
each unit. Then a ``convex neural net'' is given by predictions of the form
\begin{equation}
  \label{eq:predconvNN}
  y=\int_{\calU}
\left({\rm d}\mu_{+}(\vu)-
{\rm d}\mu_{-}(\vu)\right)[\inner{\vu}{\vx}]_+
\end{equation}
with regularizer (i.e.~complexity measure) $\mu_{+}(\calU)+\mu_{-}(\calU)$.  This is simply an infinite generalization of network with hidden units $\calU$ and $\ell_1$ regularization on the second layer: if $\calU$ is finite, \eqref{eq:predconvNN} is equivalent to
\begin{equation}
  \label{eq:predconvNNfinite}
  y = \sum_{\vu\in\calU} v(\vu)[\inner{\vu}{\vx}]_+
\end{equation}
with $v(\vu):=\mu_{+}(\vu)-\mu_{-}(\vu)$ and regularizer $\norm{v}_1=\sum_{\vu\in\calU}|v(\vu)|$.  Training a convex NN is then given by:
\begin{align}
\label{eq:fixed-L1}
 \argmin{\mu_+,\mu_-}\left(
\sum_{t=1}^{n}L\Bigl(y_t,\int_{\calU}
\left({\rm d}\mu_{+}(\vu)-
{\rm d}\mu_{-}(\vu)\right)[\inner{\vu}{\vx_t}]_+\Bigr)+\lambda
\left(\mu_{+}(\calU)+\mu_{-}(\calU)\right)
\right).
\end{align}
Moreover, even if $\calU$ is infinite and even continuous, there will always be an optimum of \eqref{eq:fixed-L1} which is a discrete measure with support at most $n+1$ \citep{Rosset07}.  That is, \eqref{eq:fixed-L1} can be equivalently written as:
\begin{align}\label{eq:fixed-L1-finite}
\argmin{\vu_1,\ldots,\vu_{n+1}\in\calU,\vv\in\R^{n+1}}\left(
\sum_{t=1}^{n}L\Bigl(y_t,\sum_{h=1}^{n+1}
v_h[\inner{\vu_h}{\vx_t}]_+\Bigr)+\lambda \norm{\vv}_1
\right),
\end{align}
which is the same as \eqref{eq:learned-L1}, with $\calU=\left\{ \vu\in\R^d \,\middle|\, \norm{\vu}\leq 1 \right\}$.

The difference between the network \eqref{eq:learned-L1}, and the
infinite network \eqref{eq:fixed-L1} is in {\em learning} versus {\em
  selecting} the hidden units, and in that in \eqref{eq:fixed-L1} we
have no limit on the number of units used.  That is, in
\eqref{eq:fixed-L1} we have all possible units in $\calU$ available to
us, and we merely need to select which we want to use, without any
constraint on the {\em number} of units used, only the over $\ell_1$
norm.  But the equivalence of \eqref{eq:fixed-L1} and
\eqref{eq:fixed-L1-finite} establishes that as long as the number of
allowed units $H$ is large enough, the two are equivalent:

\begin{corollary}
  As long as $H>n$, the weight-decay regularized network
  \eqref{eq:weight-decay-net} is equivalent to the convex neural net \eqref{eq:fixed-L1} with $\calU=\left\{ \vu\in\R^d \,\middle|\, \norm{\vu}\leq 1 \right\}$.
\end{corollary}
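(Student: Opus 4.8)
The plan is to chain together the equivalences already set up in this section and then close the single remaining gap, which is between using a \emph{fixed} number $H$ of learned units and using an unconstrained measure over the library. First I would apply Theorem~\ref{thm:L2toL1} to rewrite the weight-decay objective \eqref{eq:weight-decay-net} as its $\ell_1$-constrained counterpart \eqref{eq:learned-L1}; since this holds for every $H$, it reduces the corollary to proving that \eqref{eq:learned-L1} with $H>n$ is equivalent to the convex neural net \eqref{eq:fixed-L1} over $\calU=\{\vu\in\R^d:\norm{\vu}\le 1\}$. I would prove this equivalence by a two-sided comparison of optimal values, showing each objective is no larger than the other.

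For the ``$\le$'' direction (convex NN no worse than the finite network), I would note that any feasible $(\vv,(\vu_h)_h)$ of \eqref{eq:learned-L1} induces a discrete measure supported on the at most $H$ points $\vu_h$: put the positive part of $v_h$ into $\mu_+$ and the negative part into $\mu_-$ at each $\vu_h$. This measure reproduces exactly the same prediction $\sum_h v_h[\inner{\vu_h}{\vx_t}]_+$ and has total mass $\mu_+(\calU)+\mu_-(\calU)=\norm{\vv}_1$, so the convex NN objective \eqref{eq:fixed-L1} at this measure equals the \eqref{eq:learned-L1} objective; hence the optimum of \eqref{eq:fixed-L1} is at most that of \eqref{eq:learned-L1}. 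For the reverse direction I would invoke the finite-support guarantee of \citep{Rosset07}---the very fact already used to pass from \eqref{eq:fixed-L1} to \eqref{eq:fixed-L1-finite}---so that an optimum of \eqref{eq:fixed-L1} is attained by a measure supported on at most $n+1$ atoms. Because $H>n$ gives $H\ge n+1$, I can realize this optimal measure as a feasible point of \eqref{eq:learned-L1}: assign the (at most) $n+1$ support atoms to the first units with the corresponding signed masses as their $v_h$, and pad the remaining $H-(n+1)\ge 0$ units with any $\vu_h$ of norm at most one and $v_h=0$. The padded units change neither the represented function nor the $\ell_1$ penalty, so \eqref{eq:learned-L1} achieves the convex NN optimum, giving the matching inequality. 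Combining the two yields equal optima, and the embedding and projection above carry optimizers of each problem to optimizers of the other, which is the claimed equivalence.

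The only genuinely non-routine ingredient is the representer (finite-support) bound from \citep{Rosset07}, which I am entitled to assume since the excerpt already uses it to justify \eqref{eq:fixed-L1-finite}; granting that, the main thing to handle carefully is the bookkeeping in the padding step---that zero-weight units satisfy $\norm{\vu_h}\le 1$, contribute nothing to the data term or the penalty, and that $\norm{\vv}_1$ coincides with $\mu_+(\calU)+\mu_-(\calU)$ exactly because an optimal measure never places both positive and negative mass at the same atom. As a sanity check I would also observe that the optimal value of \eqref{eq:learned-L1} is non-increasing in $H$, since any solution can be extended to more units by switching the extra ones off; this confirms that once $H$ reaches $n+1$ the value has saturated at the infinite-network optimum and cannot decrease further, so the threshold $H>n$ is exactly where the finite and infinite problems become equivalent.
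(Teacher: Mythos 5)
Your proposal is correct and follows essentially the same route as the paper: the corollary there is obtained exactly by chaining Theorem~\ref{thm:L2toL1} with the finite-support result of \citet{Rosset07} that lets \eqref{eq:fixed-L1} be rewritten as \eqref{eq:fixed-L1-finite}, which the paper identifies with \eqref{eq:learned-L1}. Your two-sided value comparison and the zero-padding of unused units merely make explicit the bookkeeping that the paper leaves implicit, so this is a faithful (and slightly more careful) rendering of the same argument.
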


In summary, learning and selecting is equivalent if we have sufficiently
many hidden units and Theorem \ref{thm:L2toL1} gives an alternative
justification for employing $\ell_1$ regularization when the input to
hidden weights are fixed and normalized to have unit norm, namely, it is
equivalent to $\ell_2$ regularization, which can be achieved by weight
decay or implicit regularization via stochastic gradient descent.

The above equivalence holds also for networks with multiple output
units, i.e. $k>1$, where the $\ell_1$ regularization on $v$ is
replaced with the group lasso regularizer
$\sum_{h=1}^{H}\norm{\vv_h}$. Indeed, for matrix factorizations
(i.e.~with linear activations), such a group-lasso regularized
formulation is known to be equivalent to the trace norm
\eqref{eq:tracenorm} \citep[see][]{ArgEvgPon07}.

\bibliography{iclr2015}
\bibliographystyle{iclr2015}
\section*{Appendix}
For the convenience of the reader, we formalize here the hardness of
learning feed-forward neural network mentioned in the Introduction.
The results are presented in a way that is appropriate for
feed-forward networks with RELU activations, but they are really a
direct implication of recent results about learning intersections of
halfspaces.  For historical completeness we note that hardness of
learning logarithmic depth networks was already established by
\citet{kearns1994}, and that the more recent results we
discuss here \citep{klivans2006cryptographic,Daniely14} establish also hardness of
learning depth two networks, subject to perhaps simpler cryptographic
assumptions.  The presentation and construction here is similar to
that of \citet{livni14}.

\paragraph{Question}
Is there a sample complexity function $m(D,H)$ and an algorithm
$\calA$ that takes as input
$\left\{(x_i,y_i)\right\}_{i=1,\dots,M},x_i\in\{\pm 1\}^D,y\in\pm 1$
and returns a description of a function $f:\{\pm 1\}^D \rightarrow \pm
1$ such that the following is true:

For any $D$, any $H$ and any distribution $\calD(x,y)$ over $x\in\{\pm
1\}^D,y\in \pm 1$, if:
\begin{enumerate}
\item [(1)] There exists a feed-forward neural network with RELU
  actications with $D$ inputs and $H$ hidden units implementing a
  function $g:\R^D\rightarrow\R$ such that $\Pr_{(x,y)\sim\calD}[
  yg(x)>0 ] =1$ (i.e. for $(x,y)\sim\calD$, the label can be perfectly
  predicted by a network of size $H$).
\item [(2)] The input to $\calA$ is drawn i.i.d.~from $\calD$.
\item [(3)] $M\geq m(D,H)$ (i.e.~$\calA$ is provided with enough
  training data).
\end{enumerate}
then
\begin{enumerate}
\item[(1)] With probability at least $1/2$, algorithm $\calA$ returns
  a function $f$ such that $\mathbb{P}_{x,y\sim
    \calD}\left[yf(x)<0\right] < 1/4$ (that is, at least half the time
  the algorithm succeeds in learning a function with non-trivial error).
\item [(3)] $m(D,H)\leq \text{poly}(D,H)$ for some $\text{poly}(D,H)$
  (i.e.~the sample complexity required by the algorithm is polynomial
  in the network size---if we needed a super-polynomial number of
  samples, we would have no hope of learning in polynomial time).
\item [(4)] The function $f$ that corresponds to the description
  returned by $\calA$ can be computed in time $\text{poly}(D,H)$ from
  its description (i.e.~the representation used by the learner can be
  a feed-forward network of any size polynomial in $H$ and $D$,
  or of any other representation that can be efficiently computed).
\item [(5)] $\calA$ runs in time $\text{poly}(D,H,M)$
\end{enumerate}
\begin{theorem*}
Subject for the cryptographic assumptions in \cite{Daniely14}, there is no algorithm that satisfies the conditions in the above question.
\end{theorem*}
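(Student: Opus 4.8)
The plan is to establish the impossibility by reduction: any algorithm $\calA$ meeting every requirement of the Question would, as a special case, efficiently and improperly learn intersections of halfspaces over the Boolean cube, and the cryptographic assumptions of \cite{Daniely14} forbid exactly this. The only genuinely new step is a gadget showing that the concept ``intersection of $k$ halfspaces'' is captured, in the realizable sense of condition~(1), by a single-hidden-layer RELU network with $H=k+1$ hidden units; everything else is bookkeeping and an appeal to the cited hardness theorem.

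First I would build the gadget. Fix halfspaces $\inner{\gb{w}_i}{\vx}>\theta_i$ ($i=1,\dots,k$) with integer normals $\gb{w}_i$ and half-integer thresholds $\theta_i$, so that for every $\vx\in\{\pm1\}^D$ the margin $z_i:=\inner{\gb{w}_i}{\vx}-\theta_i$ is a nonzero half-integer, with the $i$-th constraint satisfied exactly when $z_i\geq\tfrac12$ and violated exactly when $z_i\leq-\tfrac12$. Define
\[
  g(\vx)=\tfrac12-\sum_{i=1}^{k}\hinge{\tfrac12-z_i}.
\]
If $\vx$ lies in the intersection then every hinge term vanishes and $g(\vx)=\tfrac12>0$, whereas if any constraint is violated the corresponding term is at least $1$ and $g(\vx)\leq-\tfrac12<0$; thus $\sign g(\vx)$ equals the $\pm1$ AND-of-halfspaces label at every point of the cube, giving condition~(1) with probability exactly one. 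Each term $\hinge{\tfrac12-z_i}$ is a RELU applied to an affine function of $\vx$, and the constant $\tfrac12$ is produced by one extra always-on RELU unit; absorbing the affine offsets by appending a single coordinate pinned to $+1$ (so $D$ grows by one) shows that $g$ is exactly a network of the form~\eqref{eq:yk} with $H=k+1$ hidden units.

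With the gadget in place the reduction is immediate. Suppose $\calA$ satisfied the Question, and let $\calD$ be any distribution on $\{\pm1\}^D\times\{\pm1\}$ labelled by an intersection of $k$ halfspaces, with $k$ in the hard regime of \cite{Daniely14} (logarithmic in $D$ suffices, matching the logarithmic hidden-unit count of the Introduction). Setting $H=k+1$, condition~(1) holds by the gadget, so on a $\text{poly}(D,H)$-size i.i.d.\ sample $\calA$ would return, with probability at least $1/2$ and in time $\text{poly}(D,H)$, an efficiently evaluable predictor $f$ with $\Pr_{(\vx,y)\sim\calD}[yf(\vx)<0]<1/4$. Because condition~(4) lets $f$ use any efficiently computable representation, this is precisely an efficient \emph{improper} learner for intersections of logarithmically many halfspaces that beats random guessing by a constant margin --- which \cite{Daniely14} rules out under their assumptions. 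The contradiction shows no such $\calA$ exists.

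The main obstacle is not the gadget, which is elementary, but matching its output to the exact statement of the hardness theorem. Concretely, I must check that the regime in which \cite{Daniely14} proves hardness --- the number of halfspaces (here $O(\log D)$, matching the logarithmic hidden-unit count) and the error level below $1/2$ that is shown unattainable --- is compatible with the Question's polynomial sample budget and its $1/4$ error target; since any algorithm achieving error below $1/4$ \emph{a fortiori} beats the weaker barrier of error $1/2-1/\text{poly}$, this direction is the favorable one. I must also confirm that their hardness is genuinely against improper learners, so that the representational freedom in condition~(4) offers no escape, and verify the routine points that the half-integer margins leave no boundary examples and that pinning one input coordinate does not disturb the marginal used in the hardness construction.
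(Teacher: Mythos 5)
Your proof is correct and follows the same overall route as the paper's: both reduce to the hardness of improperly learning intersections of $\omega(1)$ homogeneous halfspaces over $\{\pm 1\}^D$ (Theorem 5.4 of \cite{Daniely14}), via an elementary gadget realizing such intersections by a small single-hidden-layer RELU network, and both then use the representational freedom of condition (4) to conclude that even improper learners are ruled out. The difference is in the gadget. The paper spends two units per halfspace, $[\inner{w_i}{\vx}]_+ - [\inner{w_i}{\vx}-1]_+$, so that the network output counts the number of satisfied halfspaces ($H=2k$ units); since this count is nonnegative, the sign condition $yg(\vx)>0$ of condition (1) actually requires an additional output threshold near $k-\tfrac12$ (an output bias), which the paper leaves implicit, as it does the $-1$ offsets inside its hidden units. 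Your gadget instead uses one violation-detecting hinge per halfspace plus one always-on unit supplying the constant $\tfrac12$ ($H=k+1$ units), and absorbs all offsets into a pinned $+1$ input coordinate, so the resulting network is literally of the bias-free form \eqref{eq:yk}. This is slightly more economical in units, and it explicitly repairs the bias/threshold detail that the paper's own construction glosses over; the cost is the harmless bookkeeping of lifting the hard distribution from $\{\pm1\}^D$ to $\{\pm1\}^{D+1}$, which you correctly flag and which does not affect the reduction. Aside from this, the two arguments are interchangeable.
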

In fact, there is no algorithm satisfying the conditions even if we
require that the labels can be perfectly predicted by a network with a
single hidden layer with any super-constant, e.g.~$\log(D)$, number of
hidden units.
\begin{proof}
We show that every intersection of $k=\omega(1)$ homogeneous halfspaces
over $\{\pm 1\}^n$ with normals in $\{\pm1\}$
can be realized with unit margin by a feed-fowrad neural networks with
$H=2k$ hidden units in a single hidden layer.
For each hyperplane $\inner{w_i}{x}>0$, where
$w_i\in \{\pm 1\}^D$, we include two units in the hidden layer:
$g^+_{i}(x) = [\inner{w_i}{x}]_+$ and $g^-_i(x) = [\inner{w_i}{x}-1]_+$.
We set all incoming weights of the output node to be $1$.
Therefore, this network is realizing the following function:
$$
f(x) = \sum_{i=1}^k \left([\inner{w_i}{x}]_+ - [\inner{w_i}{x}-1]_+\right)
$$
Since all inputs and all weights are integer, the outputs of the first
layer will be integer, $\left([\inner{w_i}{x}]_+ -
  [\inner{w_i}{x}-1]_+\right)$ will be zero or one, and $f$ realizes
the intersection of the $k$ halfspaces with unit margin. 
Hence, the hypothesis class of neural intersection of 
$k/2$ halfspaces is a subset of hypothesis class of feed-forward neural networks with $k$
hidden units in a single hidden layer. We complete the proof by applying Theorem 5.4 in \cite{Daniely14} which states that for any $k=\omega(1)$, subject for the cryptographic assumptions in \cite{Daniely14}, the hypothesis class of intersection of
  homogeneous halfspaces over $\{\pm 1\}^n$ with normals in $\{\pm
  1\}$ is not efficiently PAC learnable (even
  improperly)\footnote{Their Theorem 5.4 talks about unrestricted
    halfspaces, but the construction in Section 7.2 uses only data in
    $\{ \pm 1 \}^D$ and halfspaces specified by $\langle w,x\rangle
    >0$ with $w\in\{\pm 1\}^D$}.
\end{proof}
We proved here that even for $H=\omega(1)$ no algorithm can satisfy
the condition in the question. A similar result can be shown for
$H=\text{poly}( D)$ subject to weaker cryptographic assumptions in
\cite{klivans2006cryptographic}.

The Theorem tells us not only that we cannot expect to fit a small
network to data even if the data is generated by the network (since
doing so would give us an efficient learning algorithm, which
contradicts the Theorem), but that we also can't expect to learn by
using a much larger network.  That is, even if we know that labels can
be perfectly predicted by a small network, we cannot expect to have a
learning algorithm that learns a much larger (but poly sized) network
that will have non-trivial error.  In fact, being representable by a
small network is not enough to ensure tractable learning no matter
what representation the learning algorithm uses (e.g.~a much larger
network, a mixture of networks, a tree over networks, etc).  This is a
much stronger statement than just saying that fitting a network to
data is $NP$-hard.  Also, precluding the possibility of tractable
learning if the labels are exactly explained by some small unknown
network of course also precludes the possibility of achieving low
error when the labels are only approximately explained by some small
unknown network (i.e.~of noisy or ``agnostic'' learning).

\end{document}